\definecolor{darkgreen}{rgb}{0,0.7,0}
\definecolor{darkblue}{rgb}{0,0,0.7}
\newcommand{\R}{\mathbb{R}}
\newcommand{\D}{\mathcal{D}}
\newcommand{\1}{\mathds{1}}
\newcommand{\dkl}{D_{\mbox {\tiny{\rm KL}}}}
\newcommand{\E}{\mathbb{E}}
\newcommand{\envelope}{(\raisebox{-.5pt}{\scalebox{1.45}{\Letter}}\kern-1.7pt)}
\DeclareMathOperator*{\argmin}{arg\,min}
\definecolor{mygreen}{rgb}{0.1,0.75,0.2}
\newcommand{\nc}{\normalcolor}
\newtheorem{thm}{Theorem}[section]
\newtheorem{remark}[thm]{Remark}
\begin{document}
\title{Variational Characterizations of Local Entropy and Heat Regularization in Deep Learning}


\author{N. Garc\'{i}a Trillos \thanks{Department of Statistics, University of Wisconsin Madison.} \and Z. Kaplan \thanks{Division of Applied Mathematics, Brown University.}
\and{D. Sanz-Alonso}  \thanks{Department of Statistics, University of Chicago.}}




\pagestyle{myheadings} \markboth{On Local Entropy and Heat Regularization in Deep Learning}{N. Garc\'{i}a Trillos, Z. Kaplan, and D. Sanz-Alonso} \maketitle


\begin{abstract}
The aim of this paper is to provide new theoretical and computational understanding on two loss regularizations employed in deep learning, known as local entropy and heat regularization. For both regularized losses we introduce variational characterizations that naturally suggest a two-step scheme for their optimization, based on the iterative shift of a probability density and the calculation of a best  Gaussian approximation in Kullback-Leibler divergence. Under this unified light, the optimization schemes for local entropy and heat regularized loss differ only over which argument of the Kullback-Leibler divergence is used to find the best Gaussian approximation. Local entropy corresponds to minimizing over the second argument, and the solution is given by moment matching. This allows to replace traditional back-propagation calculation of gradients by sampling algorithms, opening an avenue for gradient-free, parallelizable training of neural networks.
\end{abstract}

\begin{keywords}
Deep learning, local entropy, heat regularization, variational characterizations, Kullback-Leibler approximations, monotonic training.
\end{keywords}

\section{Introduction}\label{sec:intro}
The development and assessment of optimization methods for the training of deep neural networks has brought forward novel questions that call for new theoretical insights and computational techniques \cite{bottou2016optimization}. The performance of a network is determined by its ability to generalize, and choosing the network parameters by finding the global minimizer of the loss may be not only unfeasible, but also undesirable. In fact, training to a prescribed accuracy with competing optimization schemes may lead, consistenly, to different generalization error \cite{keskar2016large}. A possible explanation is that parameters in flat local minima of the loss give better generalization  \cite{hochreiter1997flat},  \cite{keskar2016large}, \cite{chaudhari2017deep}, \cite{entropysgd} and that certain schemes favor convergence to wide valleys of the loss function. These observations have led to the design of  algorithms  that employ gradient descent on a regularized loss, actively seeking minima located in wide valleys of the original loss \cite{entropysgd}. While it has been  demonstrated that the flatness of minima cannot fully explain generalization in deep learning \cite{dinh2017sharp}, \cite{neyshabur2017exploring}, there are various heuristic \cite{baldassi2015subdominant}, theoretical  \cite{chaudhari2017deep}, and empirical \cite{entropysgd} arguments that support regularizing the loss. In this paper we aim to provide new understanding on two such regularizations, referred to as local entropy and
heat regularization. 

Our first contribution is to introduce variational characterizations for both regularized loss functions. These characterizations, drawn from the literature on large deviations \cite{dupuis2011weak}, naturally suggest a two-step scheme for their optimization, based on the iterative shift of a probability density and the calculation of a best Gaussian approximation in Kullback-Leibler divergence. The schemes for both regularized losses differ only over the argument of the (asymmetric) Kullback-Leibler divergence that they minimize. Local entropy minimizes over the second argument, and the solution is given by moment matching; heat regularization minimizes over the first argument, and its solution is defined implicitly.

The second contribution of this paper is to investigate some theoretical and computational implications of the variational characterizations. On the theoretical side, we prove that if the best Kullback-Leibler approximations could be computed exactly, then the regularized losses are monotonically decreasing along the sequence of optimization iterates. This monotonic behavior suggests that the two-step iterative optimization schemes have the potential of being stable provided that the Kullback-Leibler minimizers can be computed accurately. On the computational side, we show that the two-step iterative optimization of local entropy agrees with gradient descent on the regularized loss provided that the learning rate matches the regularization parameter. Thus, the two-step iterative optimization of local entropy computes gradients implicitly in terms of expected values; this observation opens an avenue for gradient-free, paralelizable training of neural networks based on sampling. In contrast, the scheme for heat regularization finds the best Kullback-Leibler Gaussian approximation over the first argument, and its computation via stochastic optimization \cite{robbins1956empirical}, \cite{pinski2015algorithms} involves evaluation of gradients of the original loss. \nc

Finally, our third contribution is to perform a numerical case-study to assess the performance of various implementations of the two-step iterative optimization of local entropy and heat regularized functionals. These implementations differ in how the minimization of Kullback-Leibler is computed and the argument that is minimized. Our experiments suggest, on the one hand, that the computational overload of the regularized methods far exceeds the cost of performing stochastic gradient descent on the original loss. On the other hand, they also suggest that for moderate-size architectures ---where the best  Kullback-Leibler Gaussian approximations can be computed effectively--- the generalization error with regularized losses is more stable than for stochastic gradient descent over the original loss. For this reason, we investigate using stochastic gradient descent on the original loss for the first parameter updates, and then switching to optimize over a regularized loss. We also investigate numerically the choice and scoping of the regularization parameter. Our understanding upon conducting thorough numerical experiments is that, while sampling-based optimization of local entropy has the potential of being practical if parallelization is exploited and back-propagation gradient calculations are expensive, existing implementations of regularized methods in standard architectures are more expensive than stochastic gradient descent and do not clearly outperform it. \nc

Several research directions stem from this work. A broad one is to explore the use of  local entropy and heat regularizations in complex optimization problems outside of deep learning, e.g. in the computation of maximum a posteriori estimates in high dimensional Bayesian inverse problems. A more concrete direction is to generalize the Gaussian approximations within our two-step iterative schemes and allow to update both the mean and covariance of the Gaussian measures. 

The rest of the paper is organized as follows. Section \ref{sec:background} provides background on optimization problems arising in deep learning, and reviews various analytical and statistical interpretations of local entropy and heat regularized losses. In Section \ref{sec:variationallocalentropy} we introduce the variational characterization of local entropy, and derive from it a two-step iterative optimization scheme. 
Section \ref{sec:regularizef} contains analogous developments for heat regularization. Our presentation in Section \ref{sec:regularizef} is parallel to that in Section \ref{sec:variationallocalentropy}, as we aim to showcase the unity that comes from the variational characterizations of both loss functions. Section \ref{sec:kullback-leibler} reviews various algorithms for Kullback-Leibler minimization, and we conclude in Section \ref{sec:experiments} with a numerical case study.

\section{Background}\label{sec:background}

Neural networks are revolutionizing numerous fields including image and speech recognition, language processing, and robotics \cite{lecun2015deep}, \cite{goodfellow2016deep}. Broadly, neural networks are parametric families of functions used to assign outputs to inputs. The parameters $x\in \R^d$ of a network are chosen by solving a non-convex optimization problem of the form
\begin{equation}\label{eq:optimization}
\argmin_x f(x) = \argmin_x \frac1N \sum_{i=1}^N f_i(x),
\end{equation}
where each $f_i$ is a loss associated with a training example. Most popular training methods employ backpropagation (i.e. automatic differentiation) to perform some variant of gradient descent over the loss $f$. In practice, gradients are approximated using a random subsample of the training data known as \emph{minibatch}. Importantly, accurate solution of the optimization problem \eqref{eq:optimization} is \emph{not} the end-goal of neural networks; their performance is rather determined by their generalization or \emph{testing} error, that is, by their ability to accurately assign outputs to unseen examples. 

A substantial body of literature \cite{entropysgd}, \cite{neyshabur2017exploring}, \cite{bottou2016optimization} has demonstrated that optimization procedures with similar training error may consistently lead to different testing error. For instance, large mini-batch sizes have been shown to result in poor generalization \cite{keskar2016large}. Several explanations have been set forth, including overfitting, attraction to saddle points and explorative properties \cite{keskar2016large}. A commonly accepted theory is that flat local minima of the loss $f$ leads to better generalization than sharp minima \cite{hochreiter1997flat}, \cite{entropysgd}, \cite{keskar2016large}, \cite{chaudhari2017deep}. As noted in \cite{dinh2017sharp} and \cite{neyshabur2017exploring} this explanation is not fully convincing, as due to the high number of symmetries in deep networks one can typically find many parameters that have different flatness but define the same network. Further, reparameterization may alter the flatness of minima. While a complete understanding is missing, the observations above have prompted the development of new algorithms that actively seek minima in wide valleys of the loss $f.$ In this paper we provide new insights on potential advantages of two such approaches, based on local-entropy and heat regularization.

\subsection{Background on Local-Entropy Regularizatoin}
We will first study optimization of networks performed on a regularization of the loss $f$ known as \emph{local entropy}, given by 
\begin{equation}\label{def:localentropy}
 F_\tau(x):= - \log \left( \int_{\R^d} \exp\bigl( -f(x')\bigr) \varphi_{x,\tau} (x') dx' \right),
\end{equation}
where here and throughout $\varphi_{x,\tau}$ denotes the Gaussian density in $\mathbb{R}^d$ with mean $x$ and variance $\tau I.$ For given $\tau,$  $F_\tau(x)$ averages values of $f$ focusing on a neighborhood of size $\tau.$ Thus, for $F_\tau(x)$ to be small it is required that $f$ is small throughout a $\tau$-neighborhood of $x.$ Note that $F_\tau$ is equivalent to $f$ as $\tau\to 0,$ and becomes constant as $\tau\to \infty.$ Figure \ref{LER_illustration} shows that local entropy flattens sharp isolated minima, and deepens wider minima. 

A natural statistical interpretation of minimizing the loss $f$ is in terms of {\em maximum likelihood} estimation. Given training data $ {\cal D}$ one may define the likelihood function 
\begin{equation}\label{eq:likelihood}
 \rho_f(x|{\cal D}) \propto \exp\bigl(-f(x)\bigr).
\end{equation}
Thus, minimizing $f$ corresponds to maximizing the likelihood $\rho_f.$ In what follows we assume that $\rho_f$ is normalized to integrate to $1.$ Minimization of local entropy can also be interpreted in statistical terms, now as computing a {\em maximum marginal likelihood}. Consider a Gaussian prior distribution $p(x'|x) = \varphi_{x,\tau} (x'),$ indexed by a hyperparameter $x,$  on the parameters $x'$ of the neural network. Moreover, assume a likelihood $p(x' | \D) \propto\exp\bigl(-f(x')\bigr)$ as in equation \eqref{eq:likelihood}. Then, minimizing local entropy corresponds to maximizing the marginal likelihood

\begin{align}\label{eq:marginallikelihood}
\begin{split}
p(\D | x) &= \int p(\D | x') p(x'|x) dx' \\
              & = \int \exp\bigl(-f(x')\bigr) \, \varphi_{x,\tau} (x') \, dx'.
\end{split}
\end{align}


We remark that the right-hand side of equation \eqref{eq:marginallikelihood} is the convolution of the likelihood $\rho_f$ with a Gaussian, and so we have 
\begin{equation}\label{eq:convolution}
F_\tau(x) \propto -\log\Bigl( \rho_f \ast \varphi_{0,\tau}(x) \Bigr).
\end{equation}
Thus,  local entropy $F_\tau$  can be interpreted as a regularization of the likelihood $\rho_f.$ 

\begin{figure}
\includegraphics[width=\linewidth]{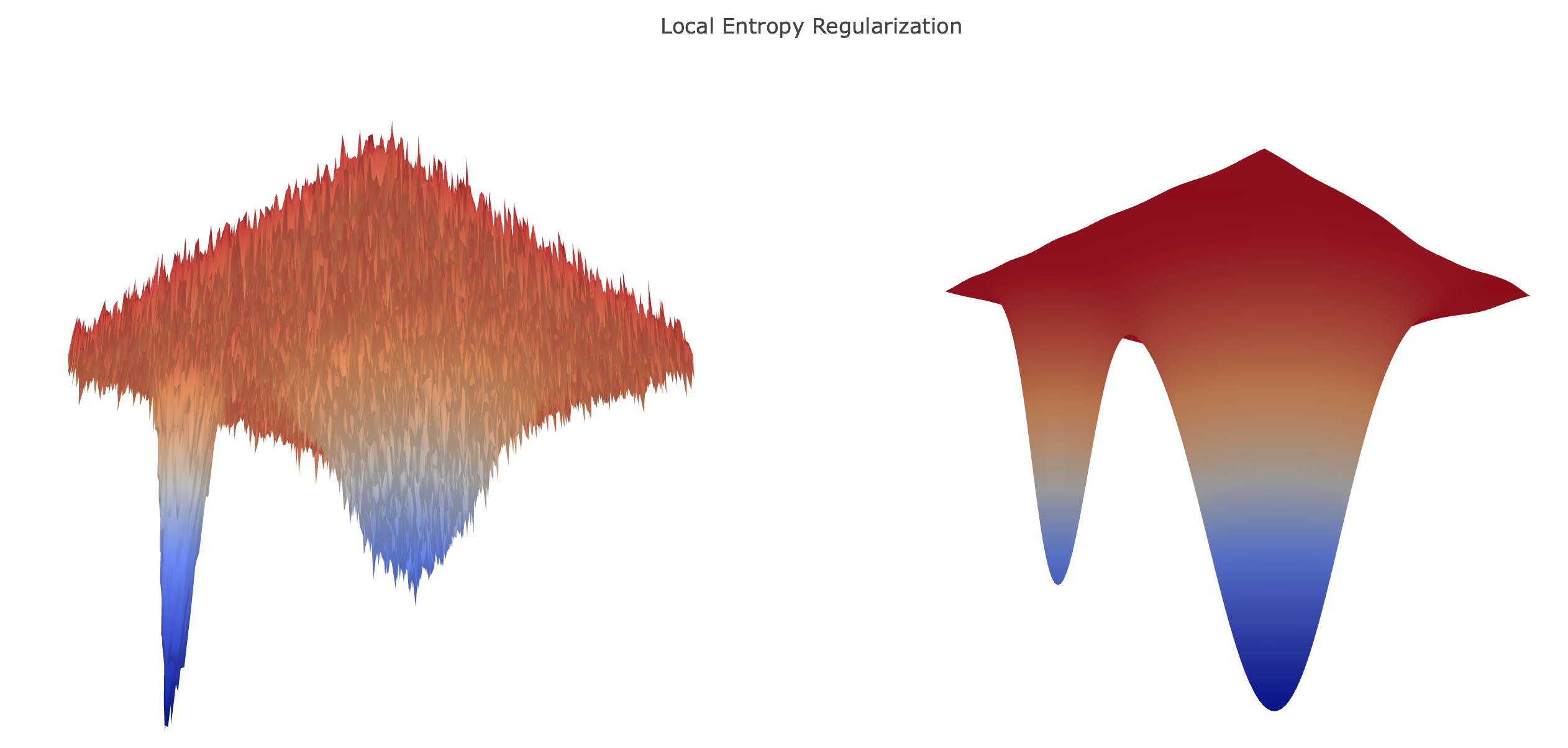}
\caption{Toy example of local entropy regularization for a two dimensional lost function. Note how the wider minima from the left figure deepens on the right, while the sharp minima become relatively shallower.}
\label{LER_illustration}
\end{figure}

\subsection{Background on Heat Regularization}
We will also consider smoothing of the loss $f$ through the heat regularization, defined by
$$F^H_\tau(x) := \int_{\R^d} f(x') \varphi_{x,\tau}(x') \, dx'. $$
Note that $F^H_\tau$ regularizes the loss $f$ directly, rather than the likelihood $\rho_f$:
$$ F^H(x) = f\ast \varphi_{0,\tau} (x). $$
Local entropy and heat regularization are, clearly, rather different. Figure \ref{HR_illustration} shows that while heat regularization smooths the energy landscape, the relative macroscopic depth of local minima is marginally modified.  
Our paper highlights, however, the common underlying structure of the resulting optimization problems.
Further analytical insights on both regularizations in terms of partial differential equations and optimal control can be found in \cite{chaudhari2017deep}. 
\subsection{Notation}
For any $x\in \R^d$ and $\tau>0$ we define the probability density 
\begin{equation}\label{eq:qkdensity}
q_{x,\tau}(x') := \frac{1}{Z_{x,\tau}} \exp\Bigl( -f(x') - \frac{1}{2\tau} \lvert x-x' \rvert^2 \Bigr), 
\end{equation}
where $Z_{x,\tau}$ is a normalization constant. These densities will play an important role throughout. 

We denote the Kullback-Leibler divergence between densities $p$ and $q$ in $\mathbb{R}^d$  by
\begin{equation}
\dkl(p \| q) := \int_{\mathbb{R}^d}  \log \left(\frac{p(x)}{q(x)} \right) p(x) \, dx.
\end{equation}
Kullback-Leibler is a divergence in that $\dkl(p \| q)\ge 0,$ with equality iff $p = q.$ However, the Kullback-Leibler is not a distance as in particular it is not symmetric;  this fact will be relevant in the rest of this paper.

\begin{figure}
\includegraphics[width=\linewidth]{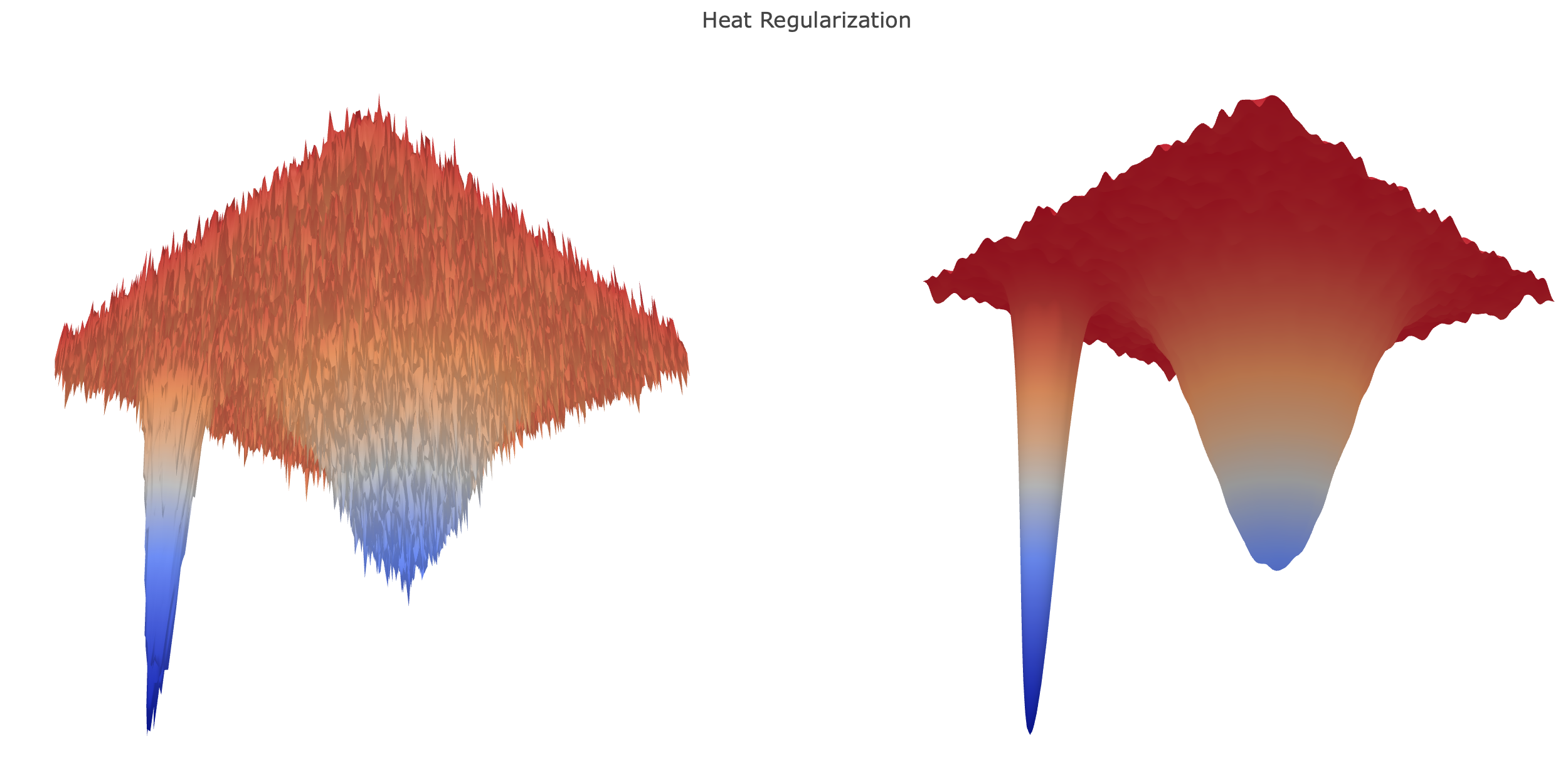}
\caption{Toy example of heat regularization for a two dimensional loss function. Here the smoothing via convolution with a Gaussian amounts to a blur, altering the texture of the landscape without changing the location of deep minima. }
\label{HR_illustration}
\end{figure}

\section{Local Entropy: Variational Characterization and Optimization}\label{sec:variationallocalentropy}

In this section we introduce a variational characterization of local entropy. We will employ this characterization  to derive a monotonic algorithm for its minimization.
The following result is well known in large deviation theory \cite{dupuis2011weak}. We present its proof for completeness. 

\begin{theorem} \label{th:chacacterization}
The local entropy admits the following variational characterization:
\begin{align}
\begin{split}
 F_\tau(x):&= - \log \left( \int_{\R^d} \exp\bigl( -f(x')\bigr) \varphi_{x,\tau} (x') dx' \right)  \\
 & = \min_{q} \left\{ \int_{\R^d} f(x') q(x')\, dx' + \dkl (q \| \varphi_{x,\tau}) \right\}.\label{eq:char} 
 \end{split}
\end{align}
Moreover, the density $q_{x,\tau}$ defined in equation \eqref{eq:qkdensity} achieves the minimum in  \eqref{eq:char}.
\end{theorem}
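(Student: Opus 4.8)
The plan is to invoke the standard Gibbs variational principle: I will show that the functional inside the minimum differs from $F_\tau(x)$ by exactly a Kullback-Leibler divergence that is nonnegative and vanishes precisely at $q_{x,\tau}$. Concretely, writing $J(q) := \int_{\R^d} f(x')q(x')\,dx' + \dkl(q\|\varphi_{x,\tau})$ for the objective (the minimization ranging over probability densities $q$ on $\R^d$), the goal reduces to establishing the identity $J(q) = F_\tau(x) + \dkl(q\|q_{x,\tau})$.

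The first step is bookkeeping on normalizations. From \eqref{eq:qkdensity} and the definition of $\varphi_{x,\tau}$, I would observe that the quadratic term $\frac{1}{2\tau}\lvert x-x'\rvert^2$ is exactly what the Gaussian kernel supplies, up to the constant $(2\pi\tau)^{d/2}$, so that $q_{x,\tau}$ can be rewritten as $q_{x,\tau} = \tilde Z^{-1}\exp(-f)\,\varphi_{x,\tau}$ with $\tilde Z := \int_{\R^d}\exp(-f(x'))\varphi_{x,\tau}(x')\,dx'$. Comparing with \eqref{def:localentropy} immediately identifies $F_\tau(x) = -\log\tilde Z$, which is the hinge of the whole argument.

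The key step is then the algebraic identity. Expanding $\dkl(q\|q_{x,\tau}) = \int q\log(q/q_{x,\tau})$ and substituting $\log q_{x,\tau} = -f + \log\varphi_{x,\tau} - \log\tilde Z$, I would split the logarithm into three pieces and use $\int q = 1$ to obtain
$$ \dkl(q\|q_{x,\tau}) = \dkl(q\|\varphi_{x,\tau}) + \int_{\R^d} f(x')q(x')\,dx' + \log\tilde Z = J(q) - F_\tau(x). $$
Rearranged, this reads $J(q) = F_\tau(x) + \dkl(q\|q_{x,\tau})$. Since $F_\tau(x)$ is independent of $q$, minimizing $J$ amounts to minimizing $\dkl(q\|q_{x,\tau})$, and the divergence property recalled in the excerpt gives $\dkl(q\|q_{x,\tau})\ge 0$ with equality iff $q = q_{x,\tau}$. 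Hence $\min_q J(q) = F_\tau(x)$, attained uniquely at $q = q_{x,\tau}$, which is exactly \eqref{eq:char} together with the claimed minimizer.

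The computation is routine once the identity is set up, so the delicate point is well-definedness rather than depth. I would ensure $\tilde Z \in (0,\infty)$ so that $F_\tau(x)$ and $q_{x,\tau}$ are meaningful, and restrict attention to densities $q$ absolutely continuous with respect to $\varphi_{x,\tau}$ for which $\dkl(q\|\varphi_{x,\tau})<\infty$ and $\int f\,q$ is defined; for the remaining $q$ the functional $J(q)$ equals $+\infty$ and the bound is trivial. Assuming the growth control on $f$ implicit in the statement (e.g.\ $f$ bounded below), all terms above are finite and the argument runs verbatim. The main thing I expect to watch is the consistent handling of the $\log\tilde Z$ constant across the two Kullback-Leibler terms, which is precisely where a sign or normalization slip would occur.
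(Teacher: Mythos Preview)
Your proposal is correct and follows essentially the same approach as the paper: both establish the identity $\dkl(q\|q_{x,\tau}) = J(q) - F_\tau(x)$ (equivalently, the paper's equation relating $\dkl(q\|q_{x,\tau})$ to $\int f q + \dkl(q\|\varphi_{x,\tau}) + \log Z_{x,\tau}$), and then use nonnegativity of Kullback--Leibler with equality iff $q=q_{x,\tau}$. Your version is slightly more careful in tracking the Gaussian normalization constant $(2\pi\tau)^{d/2}$ and in flagging the integrability hypotheses, but the argument is otherwise the same Gibbs variational principle computation.
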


\begin{proof}
For any density $q,$
\begin{equation}\label{eq:dkleq}
\dkl(q \| q_{x,\tau}) =  \int_{\R^d} f(x') q(x')\, dx' +  \dkl(q \| \varphi_{x,\tau}) + \log(Z_{x,\tau}).  
\end{equation}
Hence,
\begin{align*}
q_{x,\tau} &= \argmin_q{ \dkl( q \| q_{x,\tau} )} \\
& = \argmin\left\{ \int_{\R^d} f(x') q(x')\, dx' +  \dkl(q \| \varphi_{x,\tau}) + \log(Z_{x,\tau})\right\} \\
& = \argmin_q \left\{ \int_{\R^d} f(x') q(x') \, dx' +  \dkl(q \| \varphi_{x,\tau}) \right\},
\end{align*}
showing that $q_{x,\tau}$ achieves the minimum.
To conclude note that  $F_\tau(x) = -\log(Z_{x,\tau}),$ and so taking minimum over $q$ on both sides of equation \eqref{eq:dkleq} and rearranging gives equation \eqref{eq:char}.
\end{proof}

\subsection{Two-step Iterative Optimization}
From the variational characterization \eqref{eq:char} it follows that
\begin{equation}\label{eq:vartoalgo}
\argmin_x F_\tau(x) =
\argmin_{x} \, \min_{q} \left\{ \int_{\R^d} f(x') q(x')\, dx' + \dkl (q \| \varphi_{x,\tau}) \right\}.
\end{equation}
Thus, a natural iterative approach to finding the minimizer of $F_\tau$ is to alternate between i) minimization of the term in curly brackets over densities $q;$ and ii) finding the associated minimizer over $x.$ For the former  we can employ the explicit formula given by equation \eqref{eq:qkdensity}, while for the latter we note that the integral term does not depend on the variable $x,$ and that the minimizer of the map 
\[ x \mapsto \dkl (q_{x_k,\tau} \| \varphi_{x,\tau}) \]
is unique, and given by the expected value of $q_{x_k,\tau}.$ The statistical interpretation of these two steps is perhaps most natural  through the variational formulation of the Bayesian update \cite{BANGTSA}:  the first step finds a posterior distribution associated with likelihood $\rho_f \propto \exp(-f)$ and prior $\varphi_{x,\tau};$ the second computes the posterior expectation, which is used to define the prior mean in the next iteration. It is worth noting the parallel between this two-step optimization procedure and the empirical Bayes interpretation of local entropy mentioned in Section \ref{sec:background}.

In short, the expression \eqref{eq:vartoalgo}  suggests the following simple scheme for minimizing local-entropy:
\FloatBarrier
\begin{algorithm}
\begin{algorithmic}
\STATE Choose $x_0 \in \R^d$ and for $k=0, \ldots, K-1$ do:\label{algorithm1}
\begin{enumerate}
\item Define $q_{x_k,\tau}$ as in equation \eqref{eq:qkdensity}.
\item Define $x_{k+1}$ as the minimizer, $\E_{X \sim q_{x_k,\tau} } \left( X \right),$ of the map
\[ x \mapsto \dkl(q_{x_k,\tau}   \| \varphi_{x,\tau}). \]
\end{enumerate}
\end{algorithmic}
\caption{}
\end{algorithm}
\FloatBarrier

In practice, the expectation in the second step needs to be approximated. We will explore the potential use of gradient-free sampling schemes in Subsection \ref{ssec:minalg2} and in our numerical experiments.

A seemingly unrelated approach to minimizing the local entropy $F_\tau$ is to employ gradient descent and set
\begin{equation}
x_{k+1} = x_k - \eta \nabla F_\tau(x_k),
\label{graddescent}
\end{equation}
where $\eta$ is a learning rate. We now show that the iterates $\{x_k\}_{k=0}^K$ given by Algorithm \ref{algorithm1} agree with those given by gradient descent with learning rate $\eta = \tau.$

By direct computation
\[ \nabla F_\tau(x) = \frac{1}{\tau}\Bigl( x - \E_{X \sim q_{x,\tau} }( X) \Bigr). \]
Therefore, 
\begin{equation}
\nabla F_\tau(x_k) = \frac{1}{\tau}\left( x_k - \E_{X \sim q_{x_k,\tau} }( X) \right) = \frac{1}{\tau}( x_k - x_{k+1} ),
\label{gradient}
\end{equation}
establishing that Algorithm \ref{algorithm1} performs gradient descent with learning rate $\tau$.  This choice of learning rate leads to monotonic decrease of local entropy, as we show in the next subsection.

\begin{remark}
\label{rem:etatau} 
In this paper we restrict our attention to the update scheme \eqref{graddescent} with $\eta = \tau$. For this choice of learning rate we can deduce theoretical monotonicity according to Theorem \ref{theoremmonotone} below, but it may be computationally advantageous to use $\eta \not = \tau$ as explored in \cite{entropysgd}.  
\end{remark}


\subsection{Majorization-Minorization and Monotonicity}\label{ssec:majorization}
We now show that Algorithm \ref{algorithm1} is a majorization-minimization algorithm. Let
\[ A(x, \tilde{x}):= \int_{\R^d} f(x') q_{\tilde{x},\tau}(x')\, dx' + \dkl (q_{\tilde{x},\tau} \| \varphi_{x,\tau}) , \]
where $q_{\tilde{x},\tau}$ is as in \eqref{eq:qkdensity}. It follows that $A(x,x) = F_\tau(x)$ for all $x \in \R^d$ and  that $A(x, \tilde{x}) \geq F_\tau(x)$ for arbitrary $x, \tilde{x}$; in other words, $A$ is a majorizer for $F_\tau$. In addition, it is easy to check that the updates
\[ x_{k+1} = \argmin_{x} A(x,x_k) \]
coincide with the updates in Algorithm \ref{algorithm1}.
As a consequence we have the following theorem.
\begin{theorem}(Monotonicity and stationarity of Algorithm \ref{algorithm1}) \label{theoremmonotone}
The sequence $\{x_k\}_{k=0}^{K}$ generated by Algorithm \ref{algorithm1} satisfies
\[ F_\tau(x_{k}) \le F_\tau(x_{k-1}) , \quad 1 \le k \le K. \]
Moreover, equality holds only when $x_k$ is a critical point of $F_\tau$.
\end{theorem}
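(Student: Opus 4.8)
The plan is to exploit the majorization-minimization structure already set up immediately before the theorem, namely the three facts $A(x,x) = F_\tau(x)$, the majorizer bound $A(x,\tilde x) \ge F_\tau(x)$ for all $x,\tilde x$, and the identification of the Algorithm \ref{algorithm1} update as $x_k = \argmin_x A(x, x_{k-1})$. Granted these, monotonicity follows from a standard sandwich argument and requires no new computation.

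First I would write, for each $1 \le k \le K$, the chain
\[ F_\tau(x_k) = A(x_k,x_k) \le A(x_k, x_{k-1}) \le A(x_{k-1},x_{k-1}) = F_\tau(x_{k-1}). \]
Here the two outer equalities are the identity $A(x,x)=F_\tau(x)$; the first inequality is the majorizer bound $A(x_k,x_{k-1}) \ge F_\tau(x_k)$; and the second inequality holds because $x_k$ minimizes $x \mapsto A(x,x_{k-1})$, so in particular $A(x_k,x_{k-1}) \le A(x_{k-1},x_{k-1})$. This already yields $F_\tau(x_k) \le F_\tau(x_{k-1})$.

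For the stationarity claim I would analyze when the chain collapses. Suppose $F_\tau(x_k) = F_\tau(x_{k-1})$; then both inequalities are equalities, and in particular $A(x_k,x_{k-1}) = A(x_{k-1},x_{k-1})$, so $x_{k-1}$ also attains the minimum of $x \mapsto A(x,x_{k-1})$. Since that map has a unique minimizer, I conclude $x_k = x_{k-1}$. Finally I would invoke the gradient identity \eqref{gradient}, $\nabla F_\tau(x_{k-1}) = \tfrac1\tau(x_{k-1}-x_k)$, which now vanishes, so $x_k = x_{k-1}$ is a critical point of $F_\tau$, as claimed.

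The sandwich inequality and the gradient substitution are routine. The one step needing care, and the main obstacle, is the uniqueness of the minimizer of $x \mapsto A(x,x_{k-1})$ that underwrites the passage from equality to $x_k = x_{k-1}$. I would justify it by noting that the $x$-dependent part of $A(x,x_{k-1})$ is $\dkl(q_{x_{k-1},\tau}\|\varphi_{x,\tau})$, which, up to terms independent of $x$, is a strictly convex quadratic in $x$ arising from the $\tfrac{1}{2\tau}\lvert x-x'\rvert^2$ term in $\varphi_{x,\tau}$; hence its minimizer is unique and equals $\E_{X\sim q_{x_{k-1},\tau}}(X)$, exactly the expectation used to define the update in Algorithm \ref{algorithm1}.
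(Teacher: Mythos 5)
Your proof is correct and takes essentially the same route as the paper's: monotonicity from the majorization-minimization sandwich $F_\tau(x_k) \le A(x_k,x_{k-1}) \le A(x_{k-1},x_{k-1}) = F_\tau(x_{k-1})$, and stationarity from the gradient identity \eqref{gradient}. If anything, your version is more complete than the paper's two-line proof, since you explicitly supply the step the paper leaves implicit---that equality of the objective values forces $x_k = x_{k-1}$, justified by the uniqueness (strict convexity in $x$) of the minimizer of $x \mapsto \dkl(q_{x_{k-1},\tau} \| \varphi_{x,\tau})$.
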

\begin{proof}
The monotonicity follows immediately from the fact that our algorithm can be interpreted as a majorization-minimization scheme. For the stationarity note that equation \eqref{gradient} shows that $x_k= x_{k+1}$ if and only if $\nabla F_\tau(x_k)=0$.
\end{proof}

\section{Heat Regularization: Variational Characterization and Optimization}\label{sec:regularizef}
In this section we consider direct regularization of the loss function $f$ as opposed to regularization of the density function $\rho_f.$ The following result is analogous to Theorem \ref{th:chacacterization}. Its proof is similar and hence omitted. 

\begin{theorem} The heat regularization $F^H_\tau$ admits the following variational characterization:
\begin{align}\label{eq:variationalheat}
\begin{split}
F^H_\tau(x) :&= \int_{\R^d} f(x') \varphi_{x,\tau}(x') \, dx' \\
&=\min_{q} \left\{ \log \left( \int_{\R^d} \exp(f(x')) q(x')\, dx' \right) + \dkl(\varphi_{x,\tau} \| q) \right\}. 
\end{split}
\end{align}
Moreover, the density $q_{x,\tau}$ defined in equation \eqref{eq:qkdensity} achieves the minimum in  \eqref{eq:variationalheat}.
\end{theorem}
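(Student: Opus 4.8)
The plan is to mirror the proof of Theorem \ref{th:chacacterization}: I will establish a single pointwise-in-$q$ identity analogous to \eqref{eq:dkleq}, from which both the variational formula \eqref{eq:variationalheat} and the optimality of $q_{x,\tau}$ fall out by nonnegativity of the Kullback-Leibler divergence. The difference from the local-entropy case is that here the minimization runs over the \emph{second} argument of $\dkl$ and the objective carries the nonlinear log-partition term $\log\bigl(\int e^{f}q\bigr)$; the main task is therefore to reorganize this term into a genuine Kullback-Leibler divergence so that the trivial inequality $\dkl\ge 0$ can finish the argument.

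Concretely, I would first record that, by \eqref{eq:qkdensity} and the definition of $\varphi_{x,\tau}$, one has $q_{x,\tau}\propto e^{-f}\varphi_{x,\tau}$, so the exponentially tilted density $e^{f}q_{x,\tau}$ is proportional to $\varphi_{x,\tau}$. This suggests introducing, for an arbitrary density $q$ with $\int e^{f}q<\infty$, the normalized tilt
$$\tilde q := \frac{e^{f}q}{\int_{\R^d} e^{f(x')}q(x')\,dx'},$$
which is again a probability density. The key step is then the identity
$$\log\left(\int_{\R^d} e^{f(x')}q(x')\,dx'\right) + \dkl(\varphi_{x,\tau}\|q) = \int_{\R^d} f(x')\varphi_{x,\tau}(x')\,dx' + \dkl(\varphi_{x,\tau}\|\tilde q),$$
obtained by writing $\dkl(\varphi_{x,\tau}\|q)=\int\varphi_{x,\tau}\log(\varphi_{x,\tau}/q)$, inserting $\log(\varphi_{x,\tau}/q)=\log\bigl(\varphi_{x,\tau}/(e^{f}q)\bigr)+f$, and absorbing the normalizer $\int e^{f}q$ into $\tilde q$; the two $\log\bigl(\int e^{f}q\bigr)$ terms then cancel, which is the whole point of the tilt.

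With this identity in hand the conclusion is immediate, exactly as in Theorem \ref{th:chacacterization}. Since $\dkl(\varphi_{x,\tau}\|\tilde q)\ge 0$ with equality if and only if $\tilde q=\varphi_{x,\tau}$, and since $\tilde q=\varphi_{x,\tau}$ holds precisely when $q\propto e^{-f}\varphi_{x,\tau}$, i.e. when $q=q_{x,\tau}$, minimizing over $q$ yields $\min_q\bigl\{\log(\int e^{f}q)+\dkl(\varphi_{x,\tau}\|q)\bigr\}=\int f\varphi_{x,\tau}=F^H_\tau(x)$, attained at $q_{x,\tau}$. One checks the boundary case directly: at $q=q_{x,\tau}$ the tilt satisfies $\tilde q=\varphi_{x,\tau}$, so the last divergence vanishes and the objective equals $F^H_\tau(x)$.

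I expect the only real obstacle to be conceptual rather than computational, namely spotting that the correct manipulation is the exponential change of measure $q\mapsto\tilde q$ that turns the log-partition term into the normalizer of a tilted probability density, so that it cancels and leaves a clean $\dkl(\varphi_{x,\tau}\|\tilde q)$. This is the Donsker--Varadhan/Gibbs duality underlying \eqref{eq:variationalheat}, and it is the essential reason the minimization lands on the first argument of $\dkl$ rather than the second. Minor care is needed on the standing assumptions---full support of $q$ and finiteness of $\int e^{f}q$---so that $\tilde q$ is well-defined and the integrals converge; these mirror the implicit hypotheses already used in the proof of Theorem \ref{th:chacacterization}.
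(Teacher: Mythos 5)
Your proof is correct and is essentially the argument the paper intends: the paper omits the proof of this theorem as ``similar'' to that of Theorem \ref{th:chacacterization}, and your tilted-density identity
\[
\log\left(\int_{\R^d} e^{f}q\right) + \dkl(\varphi_{x,\tau}\,\|\,q) \;=\; F^H_\tau(x) + \dkl\bigl(\varphi_{x,\tau}\,\|\,\tilde q\bigr), \qquad \tilde q \propto e^{f}q,
\]
is precisely the analogue of \eqref{eq:dkleq}, reducing everything to nonnegativity of the Kullback--Leibler divergence, with equality holding if and only if $\tilde q = \varphi_{x,\tau}$, i.e. $q = q_{x,\tau}$. The exponential tilt is exactly the extra ingredient needed to absorb the log-partition term (it is the Gibbs/Donsker--Varadhan duality you mention), and your equality case and evaluation at $q_{x,\tau}$ check out.
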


\subsection{Two-step Iterative Optimization}\label{ssec:heatalgorithm}
From equation \eqref{eq:variationalheat} it follows that
\begin{equation}
\argmin_{x} F^H_\tau(x) =\argmin_{x} \, \inf_{q} \left\{ \log \left( \int_{\R^d} \exp(f(x')) q(x') \, dx' \right) + \dkl(\varphi_{x,\tau} \| q) \right\}.
\label{Swapped}
\end{equation}
In complete analogy with Section \ref{sec:variationallocalentropy}, equation \eqref{Swapped} suggests the following optimization scheme to  minimize $F^H_\tau.$

\FloatBarrier
\begin{algorithm}
\begin{algorithmic} 
\STATE Choose $x_0 \in \R^d$ and for $k=0, \ldots, K-1$ do:
\begin{enumerate}
\item  Define $q_{x_k,\tau}$ as in equation \eqref{eq:qkdensity}.
\item Define $x_{k+1}$ by minimizing the map
\[ x \mapsto \dkl(\varphi_{x,\tau} \| q_{x_k,\tau}). \]
\end{enumerate}
\end{algorithmic}
\caption{ \label{algorithm2}}
\end{algorithm}
\FloatBarrier

The key difference with Algorithm \ref{algorithm1} is that the arguments of the Kullback-Leibler divergence are reversed. While  $x \mapsto \dkl(q_{x_k,\tau} \| \varphi_{x,\tau})$ has a unique minimizer given by $\mathbb{E}_{X\sim q_{x_k,\tau}}(X),$ minimizers of $x \mapsto \dkl(\varphi_{x,\tau} \| q_{x_k,\tau})$ need not be unique. Moreover, the latter minimization is implicitly defined via an expectation and its computation via a Robbins-Monro \cite{robbins1956empirical} approach requires repeated evaluation of the gradient of $f$. We will outline the practical implementation of this minimization in Section \ref{ssec:minalg2}.

\subsection{Majorization-Minorization and Monotonicity}
As in subsection \ref{ssec:majorization} it is easy to see that
\[ A^H(x, \tilde{x}):= \log \left( \int_{\R^d} \exp(f(x')) q_{\tilde{x},\tau}(x') \, dx' \right) + \dkl(\varphi_{x,\tau} \| q_{\tilde x,\tau}) \]
is  a majorizer for $F^H_\tau$. This can be used to show the following theorem, whose proof is identical to that of Theorem \ref{theoremmonotone} and therefore omitted. 
\begin{theorem}(Monotonicity of Algorithm \ref{algorithm2}) \label{theoremmonotone2}
The sequence $\{x_k\}_{k=0}^{K}$ generated by Algorithm \ref{algorithm2} satisfies
\[ F^H_\tau(x_{k}) \le F^H_\tau(x_{k-1}) , \quad 1 \le k \le K. \]
\end{theorem}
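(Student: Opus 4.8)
The plan is to reproduce the majorization-minimization argument used for Theorem \ref{theoremmonotone}, now applied to the surrogate $A^H$. First I would verify the two defining properties of a majorizer directly from the variational characterization \eqref{eq:variationalheat}. Setting the two arguments equal and invoking the fact that $q_{x,\tau}$ attains the minimum in \eqref{eq:variationalheat}, I obtain the tangency identity $A^H(x,x) = F^H_\tau(x)$ for every $x \in \R^d$. For the majorization inequality I would instead use $q_{\tilde x,\tau}$ as a (generally suboptimal) competitor in the minimization defining $F^H_\tau(x)$; since the minimum over $q$ is no larger than the value at this particular choice, one obtains $F^H_\tau(x) \le A^H(x,\tilde x)$ for all $x,\tilde x$.

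Next I would confirm that the updates $x_{k+1} = \argmin_x A^H(x,x_k)$ coincide with Step 2 of Algorithm \ref{algorithm2}. This is immediate, because the term $\log\bigl(\int_{\R^d} \exp(f(x')) q_{x_k,\tau}(x')\,dx'\bigr)$ does not depend on $x$, so minimizing $A^H(\cdot,x_k)$ is equivalent to minimizing the map $x \mapsto \dkl(\varphi_{x,\tau} \| q_{x_k,\tau})$ that appears in the algorithm.

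With these pieces in place the monotonicity is the standard three-line chain: using majorization, then the minimizing property of $x_{k+1}$, then tangency,
\[ F^H_\tau(x_{k+1}) \le A^H(x_{k+1},x_k) \le A^H(x_k,x_k) = F^H_\tau(x_k), \]
and reindexing yields the claimed inequality for $1 \le k \le K$.

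I do not anticipate a genuine obstacle, since the construction is parallel to Theorem \ref{theoremmonotone}; the only point demanding care is that, unlike in Algorithm \ref{algorithm1}, the minimizer of $x \mapsto \dkl(\varphi_{x,\tau} \| q_{x_k,\tau})$ need not be unique. This does not affect monotonicity, because the middle inequality above requires only that $x_{k+1}$ attain (or at worst not exceed) the value $A^H(x_k,x_k)$, which holds for any minimizer. It does, however, explain why the stationarity conclusion of Theorem \ref{theoremmonotone} is dropped here: without the explicit gradient identity \eqref{gradient} available in the local-entropy case, equality in the chain can no longer be characterized through a single critical-point condition.
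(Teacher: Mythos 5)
Your proposal is correct and follows exactly the route the paper intends: the paper omits the proof, stating it is identical to the majorization-minimization argument of Theorem \ref{theoremmonotone}, and your verification of tangency, majorization, and the coincidence of the $\argmin$ update with Step 2 of Algorithm \ref{algorithm2} spells out precisely that argument. Your closing remark on non-uniqueness of the minimizer correctly explains why the stationarity claim of Theorem \ref{theoremmonotone} is absent from Theorem \ref{theoremmonotone2}.
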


%
%
%
%
%

\section{Gaussian Kullback-Leibler Minimization}\label{sec:kullback-leibler}

In Sections \ref{sec:variationallocalentropy} and \ref{sec:regularizef} we considered the local entropy $F_\tau$ and heat regularized loss $F_\tau^H$ and introduced two-step iterative optimization schemes for both loss functions. We summarize these schemes here for comparison purposes: 

	\begin{multicols}{2}
		{\bf Optimization of $F_\tau$}\\
		Let $x_0 \in \R^d$ and for $k=0, \ldots, K-1$ do:
		\begin{enumerate}
			\item  Define $q_{x_k,\tau}$ as in equation \eqref{eq:qkdensity}.
			\item Let $x_{k+1}$ be the minimizer of
			\[ x  \mapsto \dkl(q_{x_k,\tau}   \| \varphi_{x,\tau}). \]
		\end{enumerate}
		\columnbreak
		{\bf Optimization of $F_\tau^H$} \\
		Let $x_0 \in \R^d$ and for $k=0, \ldots, K-1$ do:
		\begin{enumerate}
			\item  Define $q_{x_k,\tau}$ as in equation \eqref{eq:qkdensity}.
			\item Let $x_{k+1}$ be a minimizer of
			\[ x  \mapsto \dkl(\varphi_{x,\tau} \| q_{x_k,\tau}). \]
		\end{enumerate}
	\end{multicols}
	
Both schemes involve finding, at each iteration, the mean vector that gives the best approximation, in Kullback-Leibler, to a probability density. For local entropy the minimization is with respect to the second argument of the Kullback-Leibler divergence, while for heat regularization the minimization is with respect to the first argument. It is useful to compare, in intuitive terms, the two different minimization
problems, both leading to a ``best Gaussian''.
In what follows we drop the subscripts and use the following nomenclature:
\begin{align*}
&\dkl(q||\varphi) = \E^{q}{\left[\log\left(\frac{q}{p}\right)\right]} \quad \quad \,\, \text{``Mean-seeking"} \\
&\dkl(\varphi|| q) = \E^{\varphi}{\left[\log\left(\frac{\varphi}{q}\right)\right]} \quad \quad  \text{``Mode-seeking"}.\\
\end{align*}

Note that in order to minimize $\dkl(\varphi\|q)$ we need $\log{\frac{\varphi}{q}}$ to be small over the support of $\varphi,$ which can happen when $\varphi \simeq q$ or $\varphi \ll q$. This illustrates the fact that minimizing $\dkl(\varphi \|q)$ may miss out components of $q$. 
For example, in Figure \ref{KL_illustration} left panel  $q$ is a bi-modal like distribution 
but minimizing $\dkl(\varphi||q)$ over Gaussians $\varphi$ can only give a single mode 
approximation which is achieved by matching one of the modes (minimizers are not guaranteed to be unique);
we may think of this as  ``mode-seeking''. In contrast, when minimizing 
$\dkl(q\|\varphi)$ over Gaussians $\varphi$ we want $\log{\frac{q}{\varphi}}$ to be small where $\varphi$ appears as the denominator. This implies that wherever $q$ has some mass we must let $\varphi$ also have some mass there in order to keep $\frac{q}{\varphi}$ as close as possible to one. Therefore the minimization is carried out by allocating the mass of $\varphi$ in a way such that on average the discrepancy between $\varphi$ and $q$ is minimized, as shown in Figure ~\ref{KL_illustration} right panel; hence the label ``mean-seeking.''

\begin{figure}
\begin{center}
\includegraphics[height = 3cm,width=10cm]{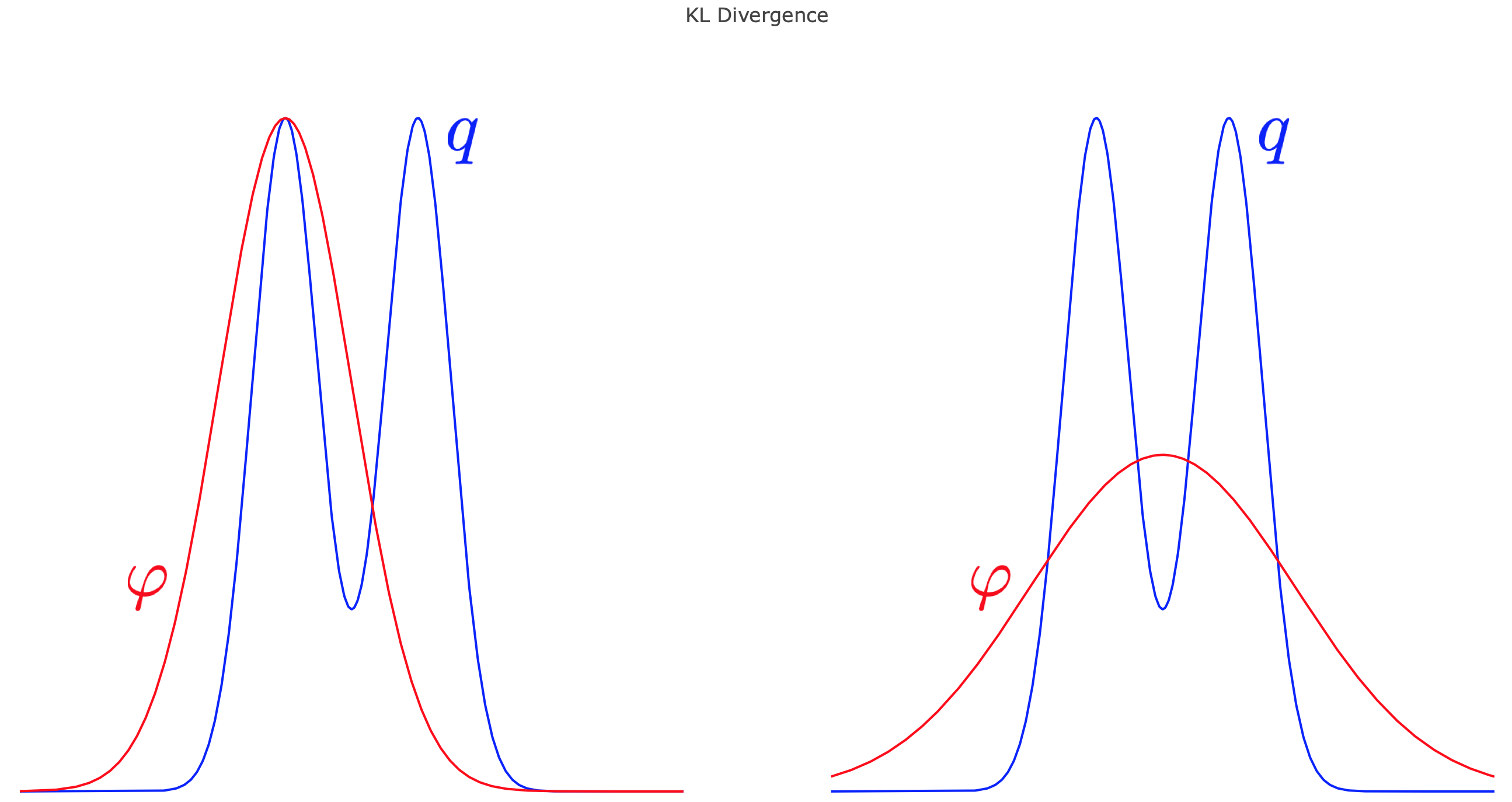}
\caption{Cartoon representation of the mode-seeking (left) and mean-seeking (right) Kullback-Leibler minimization. Mean-seeking minimization is employed within local-entropy optimization; mode-seeking minimization is employed within optimization of the heat-regularized loss. }
\label{KL_illustration}
\end{center}
\end{figure}

In the following two sections we show that, in addition to giving rather different solutions, the argument of the Kullback-Leibler divergence that is minimized has computational consequences.

\subsection{Minimization of $x \mapsto \dkl(q_{x_k,\tau}   \| \varphi_{x,\tau})$}\label{ssec:minalg1}
The solution to this minimization problem is unique and given by $\E_{X \sim q_{x_k,\tau} }\left( X \right).$
For notational convenience we drop the subscript $k$ and consider calculation of
\begin{equation}\label{eq:expectation}
\E_{X \sim q_{x,\tau}} \left( X \right).
\end{equation}
In our numerical experiments we will approximate these expectations using stochastic gradient Langevin dynamics and importance sampling. Both methods are reviewed in the next two subsections.

\subsubsection{Stochastic Gradient Langevin Dynamics}\label{ssec:SGLD}
The first method that we use to approximate the expectation \ref{eq:expectation} ---and thus the best-Gaussian approximation for local entropy optimization--- is stochastic gradient Langevin dynamics (SGLD). The algorithm was introduced in  \cite{welling2011bayesian} and  its  use for local entropy minimization was investigated in \cite{entropysgd}. The SGLD algorithm is summarized below.
\FloatBarrier
\begin{algorithm}
\begin{algorithmic}
\STATE Input: Sample size $J$ and temperatures $\{\epsilon_j\}_{j=1}^J.$ 
\begin{enumerate}
\item Define $x^0  = x. $
\item For $j=1, \ldots, J-1$ do:
$$x^{j+1} = x^j - \frac{\epsilon_j}{2}\Bigl(   \nabla f(x^j) - \frac{1}{\tau}(x-x^j) \Bigr) + \eta_j, \quad \quad \eta_t \sim N(0, \epsilon_j). $$
\end{enumerate}
Output: approximation $  \E_{X \sim q_{x,\tau}}(X) \approx \frac{\sum_{j=1}^J \epsilon_j x^j}{\sum_{j=1}^J \epsilon_j}. $
\end{algorithmic}
\caption{\label{algorithm3}}
\end{algorithm}
\FloatBarrier
When the function $f$ is defined by a large sum over training data, minibatches can be used in the evaluation of the gradients $\nabla f(x^j).$ In our numerical experiments we initialize the Langevin chain at the last iteration of the previous parameter update. Note that SGLD can be thought of as a modification of gradient-based Metropolis-Hastings Markov chain Monte Carlo algorithms, where the accept-reject mechanism is replaced by a suitable tempering of the temperatures $\epsilon_j.$

\subsubsection{Importance Sampling}\label{ssec:IS}
We will also investigate the use of importance sampling \cite{liu2008monte} to approximate the expectations \eqref{eq:expectation}; our main motivation in doing so is to avoid gradient computations, and hence to give an example of a training scheme that does not involve back propagation. 

Importance sampling is based on the observation that
\begin{equation*}
\E_{X \sim q_{x,\tau}}(X) = \int_{\R^d} x' q_{x,\tau}(x')\, dx' = \frac{ \int_{\R^d} x' \exp\bigl(-f(x') \bigr) \varphi_{x,\tau}(x') \, dx' }{ \int_{\R^d} \exp\bigl(-f(x') \bigr) \varphi_{x,\tau}(x') \, dx'},
\end{equation*}
and an approximation of the right-hand side may be obtained by standard Monte Carlo approximation of the numerator and  the denominator. Crucially, these Monte Carlo simulations are performed sampling the Gaussian $\varphi_{x,\tau}$ rather than the original density $q.$ The importance sampling algorithm is then given by:

\FloatBarrier
\begin{algorithm}
\begin{algorithmic}
\STATE Input: sample size $J.$
\begin{enumerate}
\item Sample $\{x^j\}_{j=1}^J$ from the Gaussian density $\varphi_{x,\tau}.$
\item Compute (unnormalized) weights $w^j = \exp\bigl(-f(x^j)\bigr).$
\end{enumerate}
\STATE Output: approximation 
\begin{equation}
 \E_{X \sim q_{x , \tau }}(X) \approx \frac{\sum_{j=1}^J w^j x^j}{ \sum_{j=1}^J w^j}. 
\label{IS}
\end{equation}
\end{algorithmic}
\caption{\label{algorithm4}}
\end{algorithm}
\FloatBarrier

Importance sampling is easily parallelizable. If $L$ processors are available, then each of the processors can be used to produce an estimate using $J/L$ Gaussian samples, and the associated estimates can be subsequently consolidated. 

While the use of importance sampling opens an avenue for gradient-free, parallelizable training of neural networks, our numerical experiments will show that naive implementation without parallelization gives poor performance relative to SGLD or plain stochastic gradient descent (SGD) descent on the original loss. A potential explanation is the so-called curse of dimension for importance sampling \cite{sanz2016importance}, \cite{agapiou2015importance}. Another explanation is that the iterative structure of SGLD allows to re-utilize the previous parameter update to approximate the following one while importance sampling does not afford such iterative updating. Finally, SGLD with minibatches is known to asymptotically produce unbiased estimates, while the introduction of minibatches in importance sampling introduces a bias.

\subsection{Minimization of $x \mapsto \dkl(  \varphi_{x,\tau} \|q_{x_k,\tau}  )$}\label{ssec:minalg2}
A direct calculation shows that the preconditioned Euler-Lagrange equation for minimizing $x \mapsto \dkl(  \varphi_{x,\tau} \|q_{x_k,\tau}  )$ is given by
\begin{equation*}\label{eq:Euler-Lagrange}
h(x) :=x-x_k +\tau \, \E_{Y\sim \varphi_{x,\tau}} \nabla f(Y) = 0.
\end{equation*}
Here $h(x)$ is implicitly defined as an expected value with respect to a distribution that depends on the parameter $x.$ The Robbins-Monro algorithm \cite{robbins1956empirical} allows to estimate zeroes of functions defined in such a way.
\FloatBarrier
\begin{algorithm}
\begin{algorithmic}
\STATE Input: Number of iterations $J$ and schedule $\{a^j\}_{j=1}^J.$ 
\begin{enumerate}
\item Define $x^0  = x. $
\item For $j=1, \ldots, J$ do:
\begin{equation}\label{eq:robmon}
x^{j+1} = x^j - a^j \left\{ x^j - x_k + \frac{\tau}{M} \sum_{m=1}^M \nabla f\bigl(z^{(m)}\bigr)  \right\}, \quad \quad z^{(m)} \sim \varphi_{x^j, \tau}.
\end{equation}
\end{enumerate}
\caption{\label{Robinsalgorithm}}
Output: approximation $  x^J$ to the minimizer of $x \mapsto \dkl(  \varphi_{x,\tau} \|q_{x_k,\tau}  ).$
\end{algorithmic}
\end{algorithm}
\FloatBarrier

The Robbins-Monro approach to computing the Gaussian approximation $(x,\tau) \mapsto \dkl(  \varphi_{x,\tau} \|q_{x_k,\tau}  )$ in Hilbert space was studied in \cite{pinski2015algorithms}. A suitable choice for the step size is $a^l = c l^\alpha$,  for some $c>0$ and $\alpha \in (1/2, 1].$ Note that Algorithm \ref{Robinsalgorithm} gives a form of {\emph spatially}-averaged gradient descent, which involves repeated evaluation of  the gradient of the original loss. The use of \emph{temporal} gradient averages has also been studied as a way to reduce the noise level of stochastic gradient methods \cite{bottou2016optimization}. 

To conclude we remark that an alternative approach could be to employ Robbins-Monro  directly to optimize $F^H(x).$ Gradient calculations would still be needed.

\section{Numerical Experiments}\label{sec:experiments}
In the following numerical experiments we investigate the practical use of local entropy and heat regularization in the training of neural networks. We present experiments on dense multilayered networks applied to a basic image classification task, viz. MNIST \cite{MNISTLeCun}.  We implement Algorithms \ref{algorithm3}, \ref{algorithm4}, and \ref{Robinsalgorithm} in TensorFlow, analyzing the effectiveness of each in comparison to stochastic gradient descent (SGD). We investigate whether the theoretical monotonicity of regularized losses translates into monotonicity of the held-out test data error.
\nc Additionally, we explore various choices for the hyper-parameter $\tau$ to illustrate the effects of variable levels of regularization. In accordance to the algorithms specified above, we employ importance sampling (IS) and stochastic gradient Langevin dynamics (SGLD) to approximate the expectation in \eqref{eq:expectation}, and the Robbins-Monro algorithm for heat regularization (HR).

\subsection{Network Specification}

Our experiments are carried out using the following networks:

\begin{enumerate}

\item Small Dense Network: Consisting of an input layer with 784 units and a 10 unit output layer, this toy network contains 7850 total parameters and achieves a test accuracy of 91.2 \% when trained with SGD for 5 epochs over the 60,000 image MNIST dataset. 

\item Single Hidden Layer Dense Network: Using the same input and output layer as the smaller network with an additional 200 unit hidden layer, this network provides an architecture with 159,010 parameters. We expect this architecture to achieve a best-case performance of  98.9 \%  accuracy on MNIST, trained over the same data as the previous network.

\end{enumerate}

\subsection{Training Neural Networks From Random Initialization}

Considering the computational burden of computing a Monte Carlo estimate for each weight update, we propose that Algorithms \ref{algorithm3}, \ref{algorithm4}, and \ref{Robinsalgorithm} are potentially most useful when employed following SGD; although per-update progress is on par or exceeds that of SGD with step size, often called learning rate, equivalent to the value of $\tau$, the computational load required makes the method unsuited for end-to-end training. Though in this section we present an analysis of these algorithms used for the entirety of training, this approach is likely too expensive to be practical for contemporary deep networks.  

\begin{table}
\caption{Classification Accuracy on Held-Out Test Data}
\centering
\begin{tabular}{c c c c c c}
\hline \hline
Weight Updates & 100 & 200  & 300 & 400 & 500  \\
\hline
SGD & 0.75 & 0.80 & 0.85 & 0.87 & 0.87 \\
IS & 0.27 & 0.45 & 0.54 & 0.57 & 0.65 \\
SGLD &0.72 & 0.81 & 0.84 & 0.86 & 0.88 \\
HR & 0.52 & 0.64 & 0.70 & 0.73 & 0.76  \\
\hline
\label{table:Held-out_Test_Accuracy}
\end{tabular}
\end{table}

\begin{figure}
\includegraphics[width=\linewidth]{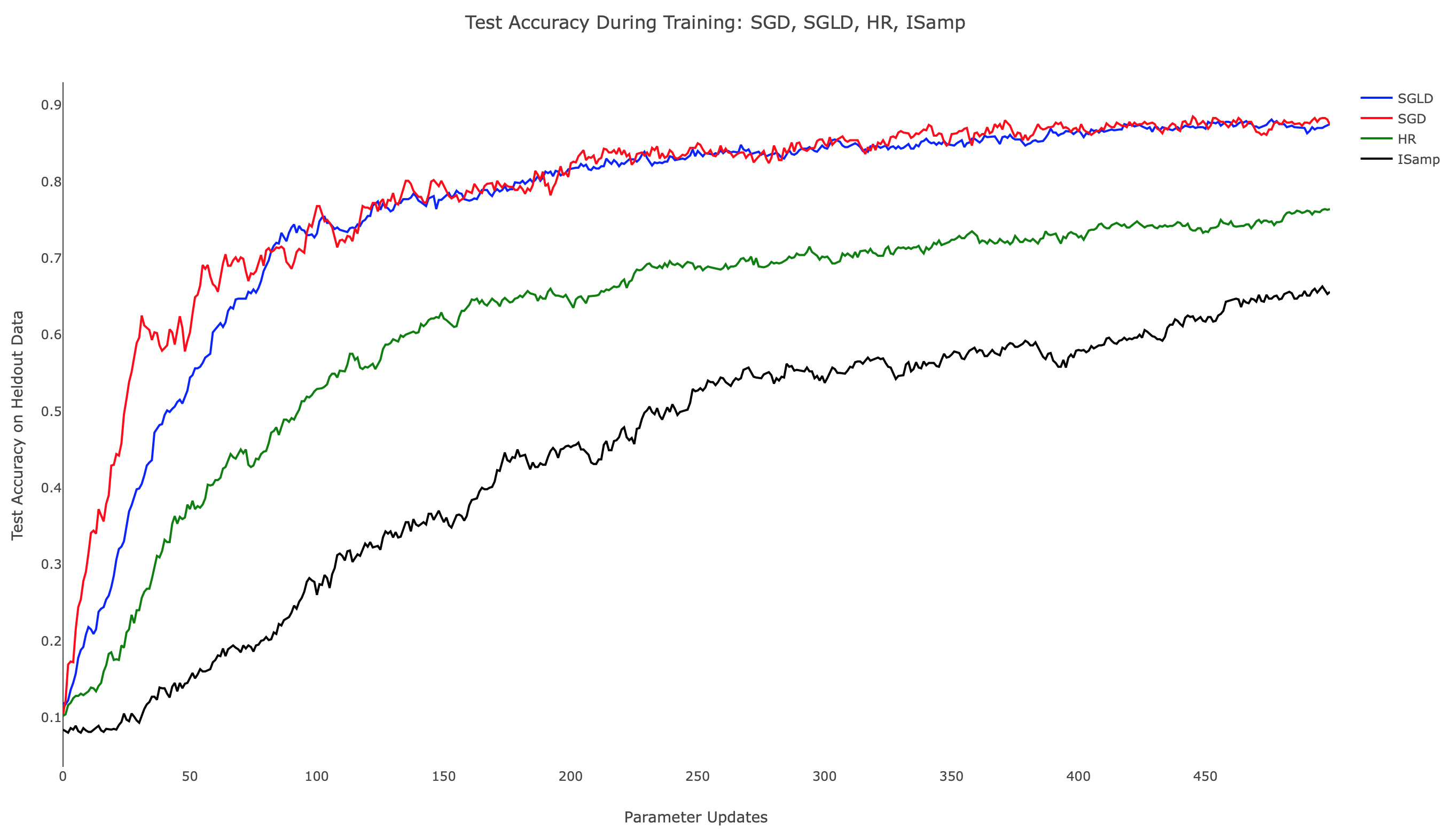}
\caption{Held-out Test Accuracy during training for SGD (Red), SGLD (Blue), HR (Green), and IS (Black). $\tau = 0.01$ for SGLD, IS, and HR. Learning rate of SGD is also $0.01$. SGLD uses temperatures $\epsilon_j = {\frac{1}{1000 + j}}$ and HR's update schedule uses $c = 0.1,$ and $ \alpha = 0.7$.}
\label{fig:CompTrain}
\end{figure}

Table \ref{table:Held-out_Test_Accuracy} and the associated Figure \ref{fig:CompTrain} demonstrate the comparative training behavior for each algorithm, displaying the held-out test accuracy for identical instantiations of the hidden layer network trained with each algorithm for 500 parameter updates. Note that a mini-batch size of 20 was used in each case to standardize the amount of training data available to the methods. Additionally, SGLD, IS, and HR each employed $\tau = 0.01$, while SGD utilized an equivalent step size, thus fixing the level of regularization in training. To establish computational equivalence between Algorithms \ref{algorithm3}, \ref{algorithm4}, and \ref{Robinsalgorithm}, we compute $\E_{X \sim q} \left( X \right)$ with $10^3$ samples for Algorithms \ref{algorithm3} and \ref{algorithm4}, setting $M = 30$ and performing $30$ updates of the chain in Algorithm \ref{Robinsalgorithm}. Testing accuracy was computed by classifying 1000 randomly selected images from the held-out MNIST test set. In related experiments, we observed consistent training progress across all three algorithms. In contrast, IS and HR trained more slowly, particularly during the parameter updates following initialization.  From Figure \ref{fig:CompTrain} we can appreciate that while SGD attempts to minimize training error, it nonetheless behaves in a stable way when plotting held-out accuracy, specially towards the end of training. SGLD on the other hand is observed to be more stable throughout the whole training.

While SGD, SGLD, and HR utilize gradient information in performing parameter updates, IS does not. This difference in approach contributes to IS's comparatively poor start; as the other methods advance quickly due to the large gradient of the loss landscape, IS's progress is isolated, leading to training that depends only on the choice of $\tau$. When $\tau$ is held constant, as shown in ~\ref{fig:CompTrain}, the rate of improvement remains nearly constant throughout. This suggests the need for dynamically updating $\tau$, as is commonly performed with annealed learning rates for SGD. Moreover, SGD, SGLD and HR are all schemes that depend linearly in $f$, making mini-batching justifiable, something that is not true for IS. 

\begin{table}
\caption{Runtime Per Weight Update}
\centering
\begin{tabular}{c c}
\hline \hline
& Average Update Runtime (Seconds) \\
\hline
SGD & 0.0032 \\
IS & 6.2504  \\
SGLD & 7.0599  \\
HR & 3.3053   \\
\hline
\label{table:Update Runtime}
\end{tabular}
\end{table}

It is worth noting that the time to train differed drastically between methods. Table \ref{table:Update Runtime} shows the average runtime of each algorithm in seconds. SGD performs roughly $10^3$ times faster than the others, an expected result considering the most costly operation in training, filling the network weights, is performed $10^3$ times per parameter update. Other factors contributing to the runtime discrepancy are the implementation specifications and the deep learning library; here, we use TensorFlow's implementation of SGD, a method for which the framework is optimized. More generally, the runtimes in Table \ref{table:Update Runtime} reflect the hyper-parameter choices for the number of Monte Carlo samples, and will vary according to the number of samples considered.

\subsection{Local Entropy Regularization after SGD}

\begin{figure}
\includegraphics[width=\linewidth]{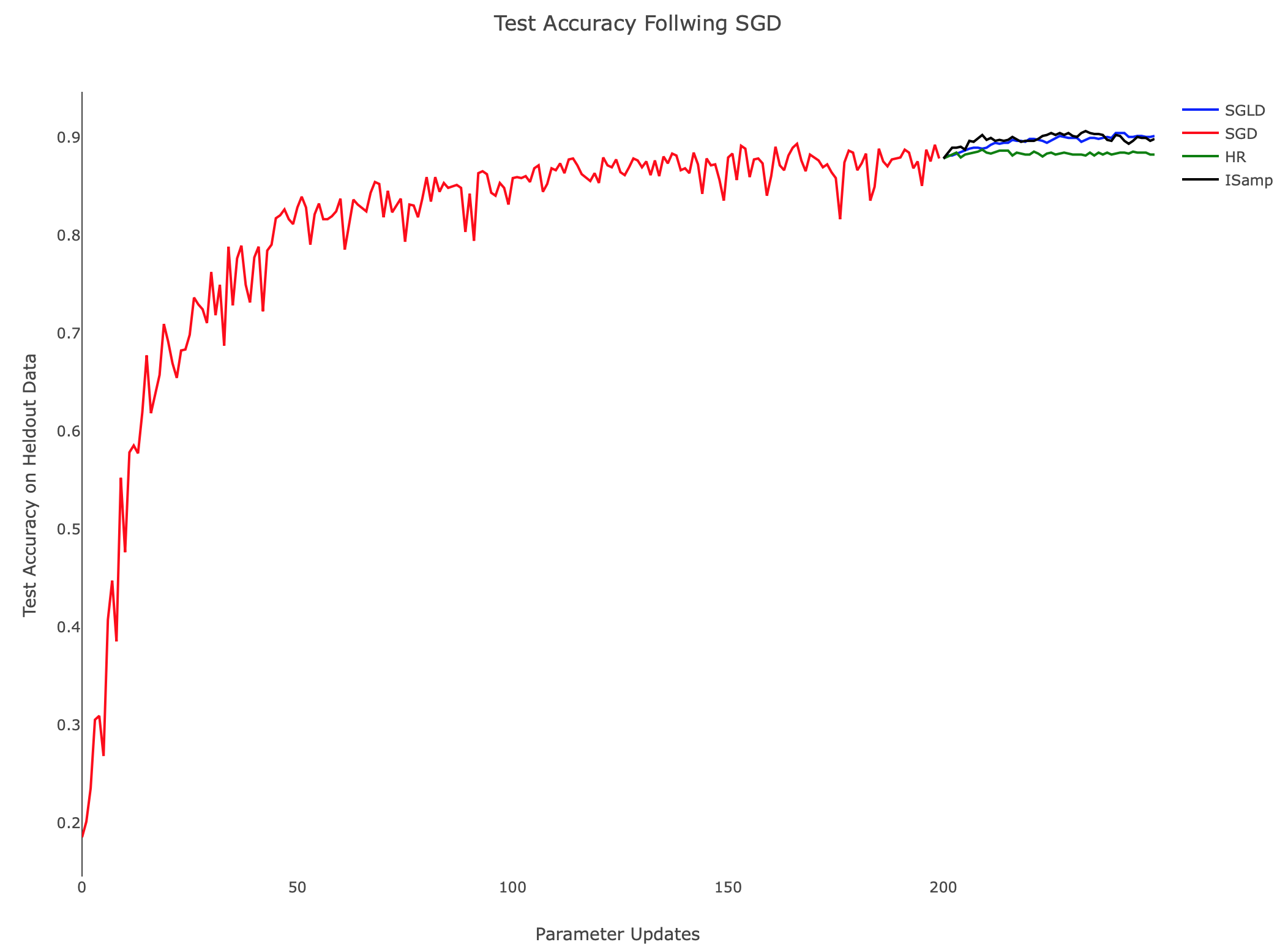}
\caption{Training after SGD, $\tau = 0.01$ for all algorithms. The step size for the SGD is set equal to the value of $\tau$ for all three algorithms. SGLD temperatures are $\epsilon_j = {\frac{1}{2000 + j}}$, and HR uses the same update schedule as in Figure \ref{fig:CompTrain}.}
\label{fig:AfterSGD}
\end{figure}

Considering the longer runtime of the sampling based algorithms in comparison to SGD, it is appealing to utilize SGD to train networks initially, then shift to more computationally intensive methods to identify local minima with favorable generalization properties.  
Figure \ref{fig:AfterSGD} illustrates IS and SGLD performing better than HR when applied after SGD. HR's smooths the loss landscape, a transformation which is advantageous for generating large steps early in training, but presents challenges as smaller features are lost. In Figure \ref{fig:AfterSGD}, this effect manifests as constant test accuracy after SGD, and no additional progress is made. The contrast between each method is notable since the algorithms use equivalent step sizes---this suggests that the methods, not the hyper-parameter choices, dictate the behavior observed.

Presumably, SGD trains the network into a sharp local minima or saddle point of the non-regularized loss landscape; transitioning to an algorithm which minimizes the local entropy regularized loss then finds an extrema which performs better on the test data. However, based on our experiments, in terms of held-out data accuracy regularization in the later stages does not seem to provide significant improvement over training with SGD on the original loss. \nc


\subsection{Algorithm Stability \& Monotonicity}

\begin{figure}
\includegraphics[width=\linewidth]{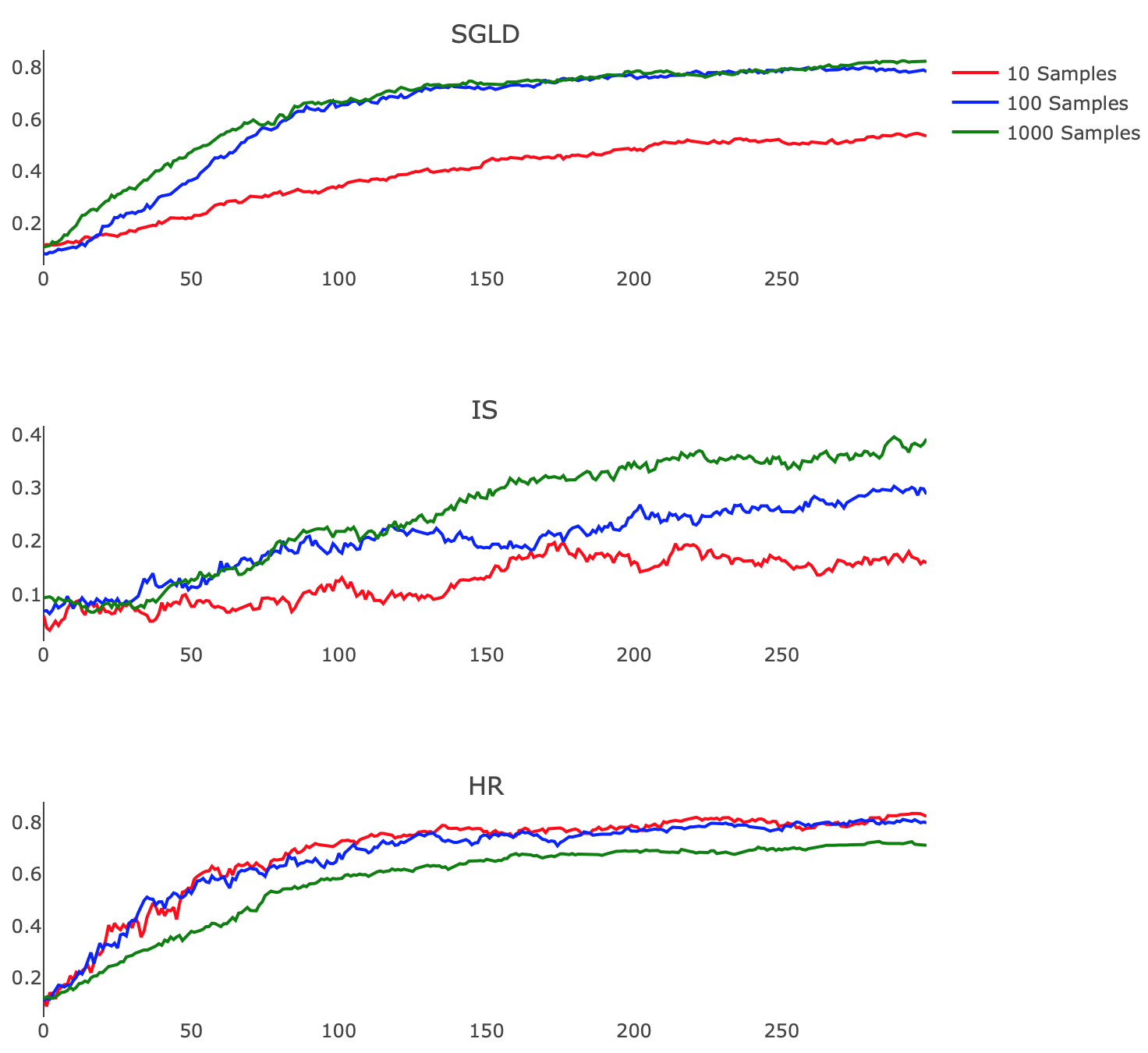}
\caption{Training behaviors with $S_i = \{10 \text{ (Red)}, \  100 \text{ (Blue)}, \ 1000 \text{ (Green)} \}$ samples per parameter update. SGLD temperatures and HR schedule are the same as in Figure \ref{fig:CompTrain}. Note that $\tau = 0.01$ throughout. To equalize computational load across algorithms, we set $M = J = \lfloor \sqrt{S_i} \rfloor $ for HR.}
\label{fig:AlgStability}
\end{figure}

Prompted by the guarantees of theorems \ref{theoremmonotone} and \ref{theoremmonotone2} which prove the effectiveness of these methods when $\E_{X \sim q} \left( X \right)$ is approximated accurately, we also demonstrate the stability of these algorithms in the case of an inaccurate estimate of the expectation. To do so, we explore the empirical consequences of varying the number of samples used in the Monte Carlo and Robbins-Monro calculations. 

Figure \ref{fig:AlgStability} shows how each algorithm responds to this change. We observe that IS performs better as we refine our estimate of $\E_{X \sim q} \left( X \right)$, exhibiting less noise and faster training rates. This finding suggests that a highly parallel implementation of IS which leverages modern GPU architecture to efficiently compute the relevant expectation may offer practicality. SGLD also benefits from a more accurate approximation, displaying faster convergence and higher final testing accuracy when comparing 10 and 100 Monte Carlo samples. HR however performs more poorly when we employ longer Robbins-Monro chains, suffering from diminished step size and exchanging quickly realized progress for less oscillatory testing accuracy. Exploration of the choices of $\epsilon_j$ and $a^j$ for SGLD and HR remains a valuable avenue for future research, specifically in regards to the interplay between these hyper-parameters and the variable accuracy of estimating $\E_{X \sim q} \left( X \right)$.

\subsection{Choosing $\tau$}

\begin{figure}
\includegraphics[width=\linewidth]{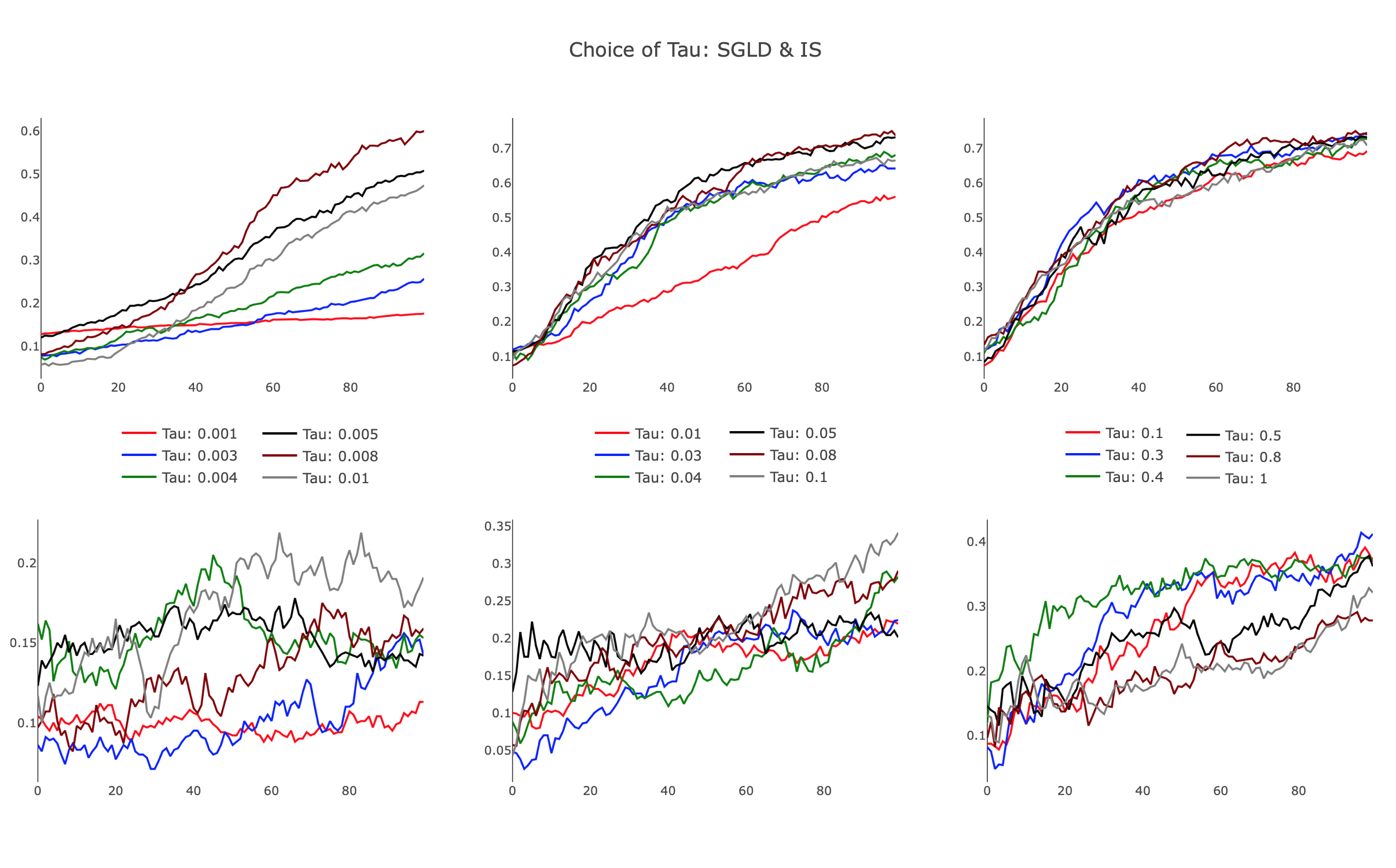}
\caption{Training the smaller neural network with different choices for $\tau$ using SGLD and IS. Values of $\tau$ vary horizontally from very small to large: $\tau \in \{[0.001, 0.01], [0.01, 0.1], [0.1, 1] \}$. Top row shows SGLD with $\epsilon_j = {\frac{1}{1000 + j}}$, bottom row shows IS. All network parameters were initialized randomly.}
\label{fig:ChoosingTau}
\end{figure}

An additional consideration of these schemes is the choice of $\tau$, the hyper-parameter which dictates the level of regularization in Algorithms \ref{algorithm3}, \ref{algorithm4}, and \ref{Robinsalgorithm}. As noted in \cite{entropysgd}, large values of $\tau$ correspond to a nearly uniform local entropy regularized loss, whereas small values of $\tau$ yield a minimally regularized loss which is very similar the original loss function. To explore the effects of small and large values of $\tau$, we train our smaller network with IS and SGLD for many choices of $\tau$, observing how regularization alters training rates.

The results, presented in Figure \ref{fig:ChoosingTau}, illustrate differences in SGLD and IS, particularly in the small $\tau$ regime. As evidenced in the leftmost plots, SGLD trains successfully, albeit slowly, with $\tau \in [0.001, 0.01]$. For small values of $\tau$, the held-out test accuracy improves almost linearly over parameter updates, appearing characteristically similar to SGD with a small learning rate. IS fails for small $\tau$, with highly variant test accuracy improving only slightly during training. Increasing $\tau$, we observe SGLD reach a point of saturation, as additional increases in $\tau$ do not affect the training trajectory. We note that this behavior persists as $\tau \to \infty$, recognizing that the regularization term in the SGLD algorithm approaches a value of zero for growing $\tau$. IS demonstrates improved training efficiency in the bottom-center panel, showing that increased $\tau$ provides favorable algorithmic improvements. This trend dissipates for larger $\tau$, with IS performing poorly as $\tau \to \infty$. The observed behavior suggests there exists an optimal $\tau$ which is architecture and task specific, opening opportunities to further develop a heuristic to tune the hyper-parameter.

\subsubsection{Scoping of $\tau$}

As suggested in \cite{entropysgd}, we anneal the scope of $\tau$ from large to small values in order to examine the landscape of the loss function at different scales. Early in training, we use comparatively large values to ensure broad exploration, transitioning to smaller values for a comprehensive survey of the landscape surrounding a minima. We use the following schedule for the $k^{th}$ parameter update:

\begin{equation*}
\tau (k) = \frac{\tau_0}{(1 + \tau_1)^k} 
\label{eq:ScopeSchedule}
\end{equation*}

where $\tau_0$ is large and $\tau_1$ is set so that the magnitude of the local entropy gradient is roughly equivalent to that of SGD.  

\begin{figure}
\includegraphics[width=\linewidth]{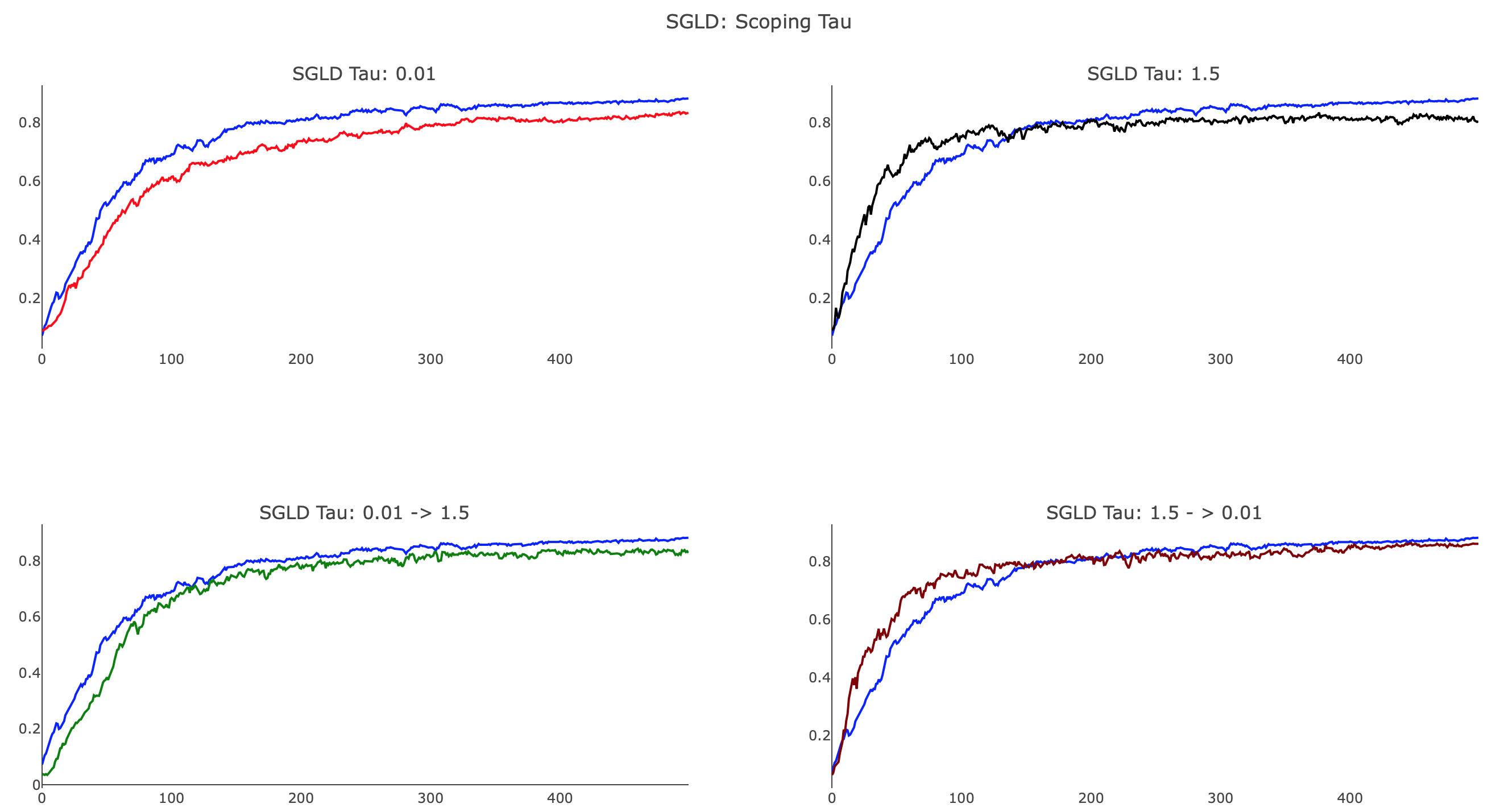}
\caption{Examination of the effects of scoping $\tau$ during training via the update schedule \ref{eq:ScopeSchedule}. All four panels display SGLD with temperatures set as $\epsilon_j = {\frac{1}{1000 + j}}$, and SGD (Blue) with a learning rate of $0.01$. Top: SGLD with constant $\tau$, set as $\tau = 0.01$ and $\tau = 1.5$. Bottom: $\tau$ scoped as $\tau: 0.01 \to 1.5$ and $\tau: 1.5 \to 0.001$.}
\label{fig:MultiPanelTauScoping}
\end{figure}

As shown in Figure \ref{fig:MultiPanelTauScoping}, annealing $\tau$ proves to be useful, and provides a method by which training can focus on more localized features to improve test accuracy. We observe that SGLD, with a smaller value of $\tau = 0.01$, achieves a final test accuracy close to that of SGD, whereas $\tau = 1.5$ is unable to identify the optimal minima. Additionally, the plot shows that large $\tau$ SGLD trains faster than SGD in the initial 100 parameter updates, whereas small $\tau$ SGLD lags behind. When scoping $\tau$ we consider both annealing and reverse-annealing, illustrating that increasing $\tau$ over training produces a network which trains more slowly than SGD and is unable to achieve testing accuracy comparable to that of SGD. Scoping $\tau$ from $1.5 \to 0.01$ via the schedule \ref{eq:ScopeSchedule} with $\tau_0 = 1.5$ and $\tau_1 = 0.01$ delivers advantageous results, yielding an algorithm which trains faster than SGD after initialization and achieves analogous testing accuracy.

\bibliographystyle{plain}
\bibliography{isbib}

\end{document}